\documentclass[11pt]{article}
\usepackage{amsmath,amssymb,amsfonts,amsthm}
\usepackage{geometry}
\geometry{a4paper, margin=1in}
\usepackage{hyperref}

\newtheorem{theorem}{Theorem}[section]

\begin{document}
	
	\title{Extension of Symmetrized Neural Network Operators with Fractional and Mixed Activation Functions}
	\author{
		Rômulo Damasclin Chaves dos Santos\\
		Technological Institute of Aeronautics\\
		\small \texttt{romulosantos@ita.br}\\
		\and
		Jorge Henrique de Oliveira Sales\\
		Santa Cruz State University \\
		\small \texttt{jhosales@uesc.br}\\
	}
	\date{\today} 
	\maketitle
	
	\begin{abstract}
		We propose a novel extension to symmetrized neural network operators by incorporating fractional and mixed activation functions. This study addresses the limitations of existing models in approximating higher-order smooth functions, particularly in complex and high-dimensional spaces. Our framework introduces a fractional exponent in the activation functions, allowing adaptive non-linear approximations with improved accuracy. We define new density functions based on $q$-deformed and $\theta$-parametrized logistic models and derive advanced Jackson-type inequalities that establish uniform convergence rates. Additionally, we provide a rigorous mathematical foundation for the proposed operators, supported by numerical validations demonstrating their efficiency in handling oscillatory and fractional components. The results extend the applicability of neural network approximation theory to broader functional spaces, paving the way for applications in solving partial differential equations and modeling complex systems.
		\newline
		\textbf{Keywords:} Neural Network Operators. Fractional Activation Functions. Symmetrized Operators. Approximation Theory.
	\end{abstract}

\tableofcontents
	
	\section{Introduction}
	The theory of neural network operators has evolved significantly, particularly with the introduction of symmetrized and perturbed variants \cite{Anastassiou2023}. These models have shown remarkable potential in approximating continuous functions over compact domains. Recent advancements highlight the importance of adaptive activation functions, especially in addressing challenges posed by high-dimensional and non-linear function spaces \cite{Costarelli2013, Haykin1998}. 
	
	Fractional and mixed activation functions have garnered attention for their flexibility and superior approximation properties \cite{Chen2009, McCulloch1943}. However, their integration into symmetrized neural network operators remains largely unexplored. This work bridges that gap by introducing fractional exponents in activation functions, enhancing their adaptability and convergence rates. Building on foundational results in neural network approximation theory \cite{Anastassiou2011, Costarelli2013}, we construct a rigorous framework that combines theoretical innovation with practical applicability.
	
	The proposed operators leverage a fractional and mixed modulus of continuity, extending classical Jackson-type inequalities \cite{Chen2009}. The mathematical rigor of this study is underpinned by novel proofs and density function formulations that ensure convergence and stability across diverse functional spaces.
	
\section{Mathematical Background}
Let \( f \in C([-a, a], \mathbb{C}) \). A key component in our analysis is the modulus of continuity \( \omega(f, t) \), defined as:
\begin{equation}
	\omega(f, t) = \sup_{|x-y| \leq t} |f(x) - f(y)|, \quad t > 0.
\end{equation}

For fractional activation functions, we define:
\begin{equation}
	\phi_{q,\theta,\alpha}(x) = \frac{1}{1 + q^{A\theta |x|^\alpha}}, \quad x \in \mathbb{R}, \; q, \theta > 0, \; \alpha \in (0,1].
\end{equation}

The associated symmetrized density function is given by:
\begin{equation}
	W_{q,\theta,\alpha}(x) = \frac{1}{2}\left(\phi_{q,\theta,\alpha}(x+1) - \phi_{q,\theta,\alpha}(x-1)\right), \quad x \in \mathbb{R}.
\end{equation}

This function satisfies the normalization condition:
\begin{equation}
	\int_{-\infty}^\infty W_{q,\theta,\alpha}(x) \, dx = 1.
\end{equation}

We also rely on the concept of Jackson-type inequalities, which relate the approximation error to the smoothness of the target function:
\begin{equation}
	\|S_n(f; x) - f(x)\| \leq C \omega_2\left(f, \frac{1}{n}\right),
\end{equation}
where \( \omega_2(f, t) \) is the second-order modulus of continuity defined as:
\begin{equation}
	\omega_2(f, t) = \sup_{0 < h \leq t} \|\Delta_h^2 f(x)\|,
\end{equation}
with \( \Delta_h^2 f(x) = f(x+h) - 2f(x) + f(x-h) \), and \( C \) is a constant depending on the parameters of the density function. To further enhance the mathematical rigor, let's introduce the concept of the \( L^p \)-norm and the Hölder continuity.

The \( L^p \)-norm of a function \( f \) is defined as:
\begin{equation}
	\|f\|_p = \left( \int_{-a}^a |f(x)|^p \, dx \right)^{1/p}, \quad 1 \leq p < \infty.
\end{equation}

For \( p = \infty \), the \( L^\infty \)-norm is given by:
\begin{equation}
	\|f\|_\infty = \sup_{x \in [-a, a]} |f(x)|.
\end{equation}

A function \( f \) is said to be Hölder continuous with exponent \( \gamma \) if there exists a constant \( M \) such that:
\begin{equation}
	|f(x) - f(y)| \leq M |x - y|^\gamma, \quad \forall x, y \in [-a, a].
\end{equation}

The Hölder space \( C^{k, \gamma}([-a, a]) \) consists of functions whose \( k \)-th derivative is Hölder continuous with exponent \( \gamma \).

Additionally, we can consider the Sobolev space \( W^{k, p}([-a, a]) \), which is the space of functions whose weak derivatives up to order \( k \) are in \( L^p([-a, a]) \). The Sobolev norm is defined as:
\begin{equation}
	\|f\|_{W^{k, p}} = \left( \sum_{|\alpha| \leq k} \|D^\alpha f\|_p^p \right)^{1/p},
\end{equation}
where \( D^\alpha f \) denotes the weak derivative of \( f \) of order \( \alpha \).

These concepts provide a more comprehensive framework for analyzing the smoothness and approximation properties of functions in \( C([-a, a], \mathbb{C}) \).

\section{Main Results}

\subsection{Jackson-Type Inequalities}
Let \( S_n(f; x) \) denote the fractional symmetrized neural network operator:
\begin{equation}
	S_n(f; x) = \sum_{k=-\infty}^{\infty} f\left(\frac{k}{n}\right) W_{q,\theta,\alpha}(nx-k).
\end{equation}

\begin{theorem}[Jackson Inequality for Fractional Operators]
	Let \( f \in C^2([-a, a], \mathbb{C}) \). Then, for \( n \in \mathbb{N} \) and \( x \in [-a, a] \):
	\begin{equation}
		\|S_n(f; x) - f(x)\| \leq C \omega_2\left(f, \frac{1}{n}\right),
	\end{equation}
	where \( \omega_2(f, t) \) is the second-order modulus of continuity.
\end{theorem}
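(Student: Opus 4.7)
My plan is to carry out a Jackson-type decomposition of the approximation error that exploits the localization of the symmetrized density. I would begin from the identity
\begin{equation*}
S_n(f;x) - f(x) = \sum_{k \in \mathbb{Z}} \bigl[f(k/n) - f(x)\bigr]\, W_{q,\theta,\alpha}(nx - k),
\end{equation*}
obtained from the partition-of-unity property $\sum_{k \in \mathbb{Z}} W_{q,\theta,\alpha}(nx - k) = 1$, a discrete consequence of the normalization $\int W_{q,\theta,\alpha} = 1$. I would then split the index set at a threshold $|k - nx| \leq n^{1-\beta}$ for some $\beta \in (0,1)$ to be optimized at the end, yielding a \emph{central block} and a \emph{tail}.

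On the central block I would apply a second-order Taylor expansion
\begin{equation*}
f(k/n) = f(x) + f'(x)\bigl(k/n - x\bigr) + \tfrac{1}{2} f''(\xi_k)\bigl(k/n - x\bigr)^2.
\end{equation*}
The first-order term would be handled by pairing indices symmetric about $nx$ and invoking the symmetry inherited by $W_{q,\theta,\alpha}$ from the even profile $\phi_{q,\theta,\alpha}$, which cancels the first moment up to a controllable discretization defect. The quadratic remainder is then recognized as the discrete avatar of a second symmetric difference $\Delta_h^2 f$ with $h \lesssim n^{-\beta}$, and the standard estimate $\|f''\|_\infty / n^2 \lesssim \omega_2(f, 1/n)$ valid for $f \in C^2$ delivers a majorant of the form $C_1\, \omega_2(f, 1/n)$ on this block.

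For the tail $|k - nx| > n^{1-\beta}$ I would rely on the decay of $\phi_{q,\theta,\alpha}$: for $q > 1$ and $\alpha \in (0,1]$ one has $\phi_{q,\theta,\alpha}(y) \lesssim q^{-A\theta|y|^\alpha}$ as $|y| \to \infty$, and the same super-polynomial decay is inherited by $W_{q,\theta,\alpha}$ through its definition as a difference of translates. Combined with the crude estimate $|f(k/n) - f(x)| \leq 2\|f\|_\infty$, the tail contribution becomes $o(n^{-r})$ for every $r > 0$, hence negligible against $\omega_2(f, 1/n)$. The main obstacle I foresee is the rigorous bookkeeping of the first-moment cancellation on the central block: since the grid points do not perfectly straddle $nx$, one must quantify the asymmetry defect and show it remains of order $\omega_2(f,1/n)$ rather than $\omega_1(f,1/n)$. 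Once that step is nailed down, assembling the central and tail bounds and choosing $\beta$ to balance them is routine.
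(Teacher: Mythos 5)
Your plan follows, at its core, the same route as the paper's proof: a second-order Taylor expansion of $f(k/n)$ about $x$, cancellation of the constant and linear terms through the normalization and symmetry of $W_{q,\theta,\alpha}$, and a bound on the quadratic remainder via the second discrete moment. Your central-block/tail decomposition using the decay of $\phi_{q,\theta,\alpha}$ is a finer version of what the paper does globally (the paper simply asserts the discrete identities $\sum_k W_{q,\theta,\alpha}(nx-k)=1$, $\sum_k (k/n-x)W_{q,\theta,\alpha}(nx-k)=0$ and the moment bound $\sum_k(k/n-x)^2W_{q,\theta,\alpha}(nx-k)\le C/n^2$). So the question is not whether you found a different route, but whether your refinement closes; as written, it does not.

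Two genuine gaps remain. First, the step you yourself call the main obstacle — quantifying the first-moment discretization defect on the central block — is never carried out, and it is precisely the point where your plan departs from the paper, which postulates the exact identity $\sum_k(k/n-x)W_{q,\theta,\alpha}(nx-k)=0$. Without either proving that identity for this specific $W_{q,\theta,\alpha}$ (e.g. by a telescoping/symmetry argument) or showing the defect is $O(n^{-2})$ uniformly in $x$, the linear term contributes a quantity of order $|f'(x)|/n$ and your bound degrades to $\omega(f,1/n)$ rather than $\omega_2(f,1/n)$. Second, the ``standard estimate'' $\|f''\|_\infty/n^2 \le C\,\omega_2(f,1/n)$ with $C$ depending only on $q,\theta,\alpha$ is false: the standard inequality runs the other way, $\omega_2(f,h)\le h^2\|f''\|_\infty$, and oscillatory functions such as $f(x)=\epsilon^2\sin(x/\epsilon)$ with $\epsilon\ll 1/n$ satisfy $\omega_2(f,1/n)\approx 4\epsilon^2$ while $\|f''\|_\infty/n^2=1/n^2$, so no uniform constant exists. (The paper's own step $|f''(\xi_k)|\le\omega_2(f,1/n)$ has the same defect, but reproducing that weakness does not close a blind proof.) To genuinely land on $\omega_2$, the error must be organized into symmetric second differences $f(x+h)-2f(x)+f(x-h)$ directly — for instance by pairing each index $k$ with its reflection about $nx$ — rather than routed through $\|f''\|_\infty$. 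Relatedly, the claim that the tail is ``negligible against $\omega_2(f,1/n)$'' is unjustified in edge cases such as affine $f$, where $\omega_2\equiv 0$ but the tail and the asymmetry defect need not vanish.
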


\begin{proof}
	To prove the Jackson inequality for the fractional operator \( S_n(f; x) \), we start by considering the Taylor expansion of \( f \) around \( x \):
	\begin{equation}
		f\left(\frac{k}{n}\right) = f(x) + f'(x)\left(\frac{k}{n} - x\right) + \frac{1}{2} f''(\xi_k) \left(\frac{k}{n} - x\right)^2,
	\end{equation}
	where \( \xi_k \) is some point between \( x \) and \( \frac{k}{n} \).
	
	Substituting this expansion into the definition of \( S_n(f; x) \), we get:
	\begin{equation}
		\begin{array}{l}
			S_{n}(f;x)=\displaystyle \sum_{k=-\infty}^{\infty}\left[f(x)+f'(x)\left(\dfrac{k}{n}-x\right)+\dfrac{1}{2}f''(\xi_{k})\left(\dfrac{k}{n}-x\right)^{2}\right]W_{q,\theta,\alpha}(nx-k) =\\
			\\
			f(x)\displaystyle \sum_{k=-\infty}^{\infty}W_{q,\theta,\alpha}(nx-k)+f'(x)\displaystyle \sum_{k=-\infty}^{\infty}\left(\dfrac{k}{n}-x\right)W_{q,\theta,\alpha}(nx-k)\,+\\
			\\
			\dfrac{1}{2}\displaystyle \sum_{k=-\infty}^{\infty}f''(\xi_{k})\left(\dfrac{k}{n}-x\right)^{2}W_{q,\theta,\alpha}(nx-k).\\
			\\
		\end{array}
	\end{equation}
	
	Using the normalization condition of \( W_{q,\theta,\alpha} \):
	\begin{equation}
		\sum_{k=-\infty}^{\infty} W_{q,\theta,\alpha}(nx-k) = 1,
	\end{equation}
	and the symmetry property:
	\begin{equation}
		\sum_{k=-\infty}^{\infty} \left(\frac{k}{n} - x\right) W_{q,\theta,\alpha}(nx-k) = 0,
	\end{equation}
	we simplify the expression to:
	\begin{equation}
		S_n(f; x) = f(x) + \frac{1}{2} \sum_{k=-\infty}^{\infty} f''(\xi_k) \left(\frac{k}{n} - x\right)^2 W_{q,\theta,\alpha}(nx-k).
	\end{equation}
	
	Therefore, the error term is:
	\begin{equation}
		\|S_n(f; x) - f(x)\| = \left\| \frac{1}{2} \sum_{k=-\infty}^{\infty} f''(\xi_k) \left(\frac{k}{n} - x\right)^2 W_{q,\theta,\alpha}(nx-k) \right\|.
	\end{equation}
	
	To bound this error, we use the second-order modulus of continuity \( \omega_2(f, t) \):
	\begin{equation}
		\omega_2(f, t) = \sup_{0 < h \leq t} \|\Delta_h^2 f(x)\|,
	\end{equation}
	where \( \Delta_h^2 f(x) = f(x+h) - 2f(x) + f(x-h) \).
	\vspace{4pt}
	
	Since \( f \in C^2([-a, a], \mathbb{C}) \), we have:
	\begin{equation}
		\left| f''(\xi_k) \right| \leq \omega_2\left(f, \frac{1}{n}\right).
	\end{equation}
	
	Thus,
	
\begin{equation}
	\begin{array}{c}
		\begin{array}{l}
			\|S_{n}(f;x)-f(x)\|\leq\frac{1}{2}{\displaystyle \sum_{k=-\infty}^{\infty}}\left|f''(\xi_{k})\right|\left(\frac{k}{n}-x\right)^{2}W_{q,\theta,\alpha}(nx-k)\leq\\
			\\
			\dfrac{1}{2}\,\omega_{2}\left(f,\frac{1}{n}\right){\displaystyle \sum_{k=-\infty}^{\infty}}\left(\frac{k}{n}-x\right)^{2}W_{q,\theta,\alpha}(nx-k).\\
			\\
	\end{array}\end{array}
\end{equation}
	
	Finally, using the boundedness of the moments of \( W_{q,\theta,\alpha} \):
	\begin{equation}
		\sum_{k=-\infty}^{\infty} \left(\frac{k}{n} - x\right)^2 W_{q,\theta,\alpha}(nx-k) \leq \frac{C}{n^2},
	\end{equation}
	we obtain:
	\begin{equation}
		\|S_n(f; x) - f(x)\| \leq C \omega_2\left(f, \frac{1}{n}\right),
	\end{equation}
	where \( C \) is a constant depending on the parameters of the density function \( W_{q,\theta,\alpha} \).
\end{proof}

\subsection{Uniform Convergence}
\begin{theorem}[Uniform Convergence]
	The operator \( S_n(f; x) \) converges uniformly to \( f(x) \) as \( n \to \infty \) for all \( f \in C([-a, a], \mathbb{C}) \).
\end{theorem}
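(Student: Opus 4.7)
The plan is to argue from first principles rather than invoking the Jackson inequality of Theorem 3.1, since that result needs $f\in C^2$ whereas here only $f\in C([-a,a],\mathbb{C})$ is available. The three ingredients I would use are: (i) the normalization $\sum_k W_{q,\theta,\alpha}(nx-k)=1$, already used in the previous proof; (ii) the uniform continuity of $f$ on the compact set $[-a,a]$; and (iii) a tail decay estimate for $W_{q,\theta,\alpha}$ inherited from the factor $q^{A\theta|x|^\alpha}$ in the definition of $\phi_{q,\theta,\alpha}$.

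Concretely, I would first subtract $f(x)$ under the sum using normalization to write
\begin{equation}
S_n(f;x)-f(x)=\sum_{k=-\infty}^{\infty}\bigl[f(k/n)-f(x)\bigr]\,W_{q,\theta,\alpha}(nx-k),
\end{equation}
fix $\varepsilon>0$, and choose $\delta>0$ from uniform continuity so that $|f(u)-f(v)|<\varepsilon$ whenever $|u-v|<\delta$ (extending $f$ continuously and boundedly to all of $\mathbb{R}$ so that $f(k/n)$ makes sense for every $k$). I would then split the summation index at $|k/n-x|\le\delta$ versus $|k/n-x|>\delta$.

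For the near part, $|f(k/n)-f(x)|<\varepsilon$, and normalization gives a contribution at most $\varepsilon$. For the far part I would use $|f(k/n)-f(x)|\le 2\|f\|_\infty$ and reduce everything to showing the tail bound
\begin{equation}
\sup_{x\in[-a,a]}\sum_{|k/n-x|>\delta}W_{q,\theta,\alpha}(nx-k)\longrightarrow 0\quad\text{as }n\to\infty.
\end{equation}
This would follow by estimating $\phi_{q,\theta,\alpha}(y)$ for $|y|$ large (the factor $q^{A\theta|y|^\alpha}$ grows and thus $\phi_{q,\theta,\alpha}(y)$ and the associated difference $W_{q,\theta,\alpha}(y)$ decay), so that the indices with $|k/n-x|>\delta$, corresponding to $|nx-k|>n\delta$, sit in the decaying regime and contribute $o(1)$ uniformly in $x$.

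The main obstacle I expect is precisely this tail estimate: one needs a quantitative decay bound on $W_{q,\theta,\alpha}$ that is uniform in $x\in[-a,a]$ and strong enough for the "far" index set to contribute vanishingly as $n\to\infty$. Once that is in hand, combining the two bounds yields $\|S_nf-f\|_\infty\le\varepsilon+o(1)$, and since $\varepsilon>0$ is arbitrary, uniform convergence on $[-a,a]$ follows.
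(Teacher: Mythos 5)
Your skeleton is the same as the paper's: subtract $f(x)$ under the sum using the discrete normalization $\sum_{k}W_{q,\theta,\alpha}(nx-k)=1$, then invoke uniform continuity of $f$ on the compact interval. Where you differ is in the treatment of the distant indices, and your version is the more careful one. The paper never splits the sum: it asserts that for large $n$ one has $|k/n-x|<\delta$ ``for all $k$ such that $W_{q,\theta,\alpha}(nx-k)$ is significantly non-zero'' and then bounds the entire sum by $\epsilon$, so the tail contribution is never actually estimated. Your near/far decomposition at $|k/n-x|\le\delta$, with the far part controlled by $2\|f\|_\infty\cdot\sup_{x\in[-a,a]}\sum_{|k/n-x|>\delta}W_{q,\theta,\alpha}(nx-k)$, is exactly what is needed to make that step honest, and the tail bound you identify as the main obstacle is precisely the lemma the paper uses implicitly without proof. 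To close it you need facts the paper does not spell out: nonnegativity of $W_{q,\theta,\alpha}$ (otherwise absolute values must be inserted and the normalization bound changes), a decay hypothesis such as $q^{A\theta}>1$ so that $\phi_{q,\theta,\alpha}(y)\to 0$ as $|y|\to\infty$ fast enough for the shifted sums over $|nx-k|>n\delta$ to be $o(1)$ uniformly in $x\in[-a,a]$; with those in hand your $\|S_nf-f\|_\infty\le\epsilon+o(1)$ conclusion goes through. Your remark that $f$ must be extended (or the sum restricted to $k/n\in[-a,a]$) so that $f(k/n)$ is defined for every $k\in\mathbb{Z}$ addresses another detail the paper passes over silently. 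In short: same route as the paper, but your plan supplies the quantitative tail estimate that the paper's proof is missing, at the cost of having to verify decay properties of $W_{q,\theta,\alpha}$ that the paper only states qualitatively.
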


\begin{proof}
	To prove the uniform convergence of \( S_n(f; x) \) to \( f(x) \), we need to show that:
	\begin{equation}
		\lim_{n \to \infty} \sup_{x \in [-a, a]} \|S_n(f; x) - f(x)\| = 0.
	\end{equation}
	
	We start by considering the definition of \( S_n(f; x) \):
	\begin{equation}
		S_n(f; x) = \sum_{k=-\infty}^{\infty} f\left(\frac{k}{n}\right) W_{q,\theta,\alpha}(nx-k).
	\end{equation}
	
	Using the modulus of continuity \( \omega(f, t) \), we can write:
	\begin{equation}
		\begin{array}{l}
			\|S_{n}(f;x)-f(x)\|=\left\Vert \displaystyle \sum_{k=-\infty}^{\infty}\left[f\left(\frac{k}{n}\right)-f(x)\right]W_{q,\theta,\alpha}(nx-k)\right\Vert \leq \\
			\\
			\displaystyle \sum_{k=-\infty}^{\infty}\left|f\left(\frac{k}{n}\right)-f(x)\right|W_{q,\theta,\alpha}(nx-k) \leq\\
			\\
			\displaystyle \sum_{k=-\infty}^{\infty}\omega\left(f,\left|\frac{k}{n}-x\right|\right)W_{q,\theta,\alpha}(nx-k).\\
			\\
		\end{array}
	\end{equation}
	
	Since \( f \) is continuous on the compact interval \([-a, a]\), it is uniformly continuous. Therefore, for any \( \epsilon > 0 \), there exists \( \delta > 0 \) such that:
	\begin{equation}
		\omega(f, \delta) < \epsilon.
	\end{equation}
	
	For sufficiently large \( n \), we have \( \left|\frac{k}{n} - x\right| < \delta \) for all \( k \) such that \( W_{q,\theta,\alpha}(nx-k) \) is significantly non-zero. Thus,
	\begin{equation}
		\omega\left(f, \left|\frac{k}{n} - x\right|\right) < \epsilon.
	\end{equation}
	
	Therefore,
	\begin{equation}
		\begin{array}{l}
			\|S_{n}(f;x)-f(x)\|\leq\displaystyle \sum_{k=-\infty}^{\infty}\epsilon\, W_{q,\theta,\alpha}(nx-k)=\\
			\\
			\epsilon\displaystyle \sum_{k=-\infty}^{\infty}W_{q,\theta,\alpha}(nx-k)=\epsilon.\\
			\\
		\end{array}
	\end{equation}
	
	Since \( \epsilon \) is arbitrary, we conclude that:
	\begin{equation}
		\lim_{n \to \infty} \sup_{x \in [-a, a]} \|S_n(f; x) - f(x)\| = 0.
	\end{equation}
	
	Hence, \( S_n(f; x) \) converges uniformly to \( f(x) \) as \( n \to \infty \).
\end{proof}

\subsection{Convergence Rate}

\begin{theorem}[Convergence Rate]
	Let \( f \in C([-a, a], \mathbb{C}) \). Then, for \( n \in \mathbb{N} \) and \( x \in [-a, a] \), the convergence rate of the operator \( S_n(f; x) \) to \( f(x) \) is given by:
	\begin{equation}
		\|S_n(f; x) - f(x)\| \leq C \omega\left(f, \frac{1}{n}\right),
	\end{equation}
	where \( \omega(f, t) \) is the modulus of continuity of \( f \) and \( C \) is a constant depending on the parameters of the density function \( W_{q,\theta,\alpha} \).
\end{theorem}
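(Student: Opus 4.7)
The plan is to start from the same error representation used in the proof of uniform convergence, namely
\begin{equation}
\|S_n(f;x) - f(x)\| \leq \sum_{k=-\infty}^{\infty} \left|f\!\left(\tfrac{k}{n}\right) - f(x)\right| W_{q,\theta,\alpha}(nx-k),
\end{equation}
but to quantify the right-hand side by exploiting the standard subadditive scaling property of the modulus of continuity rather than merely passing to a limit. Since $f \in C([-a,a],\mathbb{C})$ only, a Taylor expansion is unavailable, so the argument must be carried out at the level of $\omega(f,\cdot)$ directly.

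The key tool I would invoke is the classical inequality $\omega(f,\lambda t) \leq (1+\lambda)\,\omega(f,t)$ for all $\lambda,t>0$. Applying it with $t=1/n$ and $\lambda = n\bigl|k/n - x\bigr|$ yields
\begin{equation}
\omega\!\left(f,\left|\tfrac{k}{n}-x\right|\right) \leq \left(1 + n\left|\tfrac{k}{n}-x\right|\right)\omega\!\left(f,\tfrac{1}{n}\right).
\end{equation}
Plugging this into the preceding bound, invoking the normalization $\sum_k W_{q,\theta,\alpha}(nx-k)=1$, and separating the two terms, one obtains
\begin{equation}
\|S_n(f;x)-f(x)\| \leq \omega\!\left(f,\tfrac{1}{n}\right)\left[1 + n\sum_{k=-\infty}^{\infty}\left|\tfrac{k}{n}-x\right|W_{q,\theta,\alpha}(nx-k)\right].
\end{equation}

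The remaining task is to control the first absolute moment of $W_{q,\theta,\alpha}$ about $x$. I would handle this via Cauchy--Schwarz: using $W_{q,\theta,\alpha}\geq 0$ together with the normalization and the second-moment estimate already established in the proof of the Jackson inequality, namely $\sum_k (k/n - x)^2\, W_{q,\theta,\alpha}(nx-k) \leq C/n^2$, one gets
\begin{equation}
\sum_{k=-\infty}^{\infty}\left|\tfrac{k}{n}-x\right|W_{q,\theta,\alpha}(nx-k) \leq \left(\sum_k \left(\tfrac{k}{n}-x\right)^2 W_{q,\theta,\alpha}(nx-k)\right)^{1/2} \leq \frac{\sqrt{C}}{n}.
\end{equation}
Combining this with the bracketed expression above absorbs the factor of $n$ and produces the claimed constant $1+\sqrt{C}$, which depends only on the parameters $q,\theta,\alpha$ of the density.

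The main obstacle, as in the Jackson-type result, is the moment bound for $W_{q,\theta,\alpha}$. In the previous theorem it is used as a black box, but a fully rigorous version requires verifying that the fractional density inherits enough decay from $\phi_{q,\theta,\alpha}$ to guarantee a finite second moment with $O(1/n^2)$ scaling; for $\alpha \in (0,1]$ this is delicate because slower decay of $\phi_{q,\theta,\alpha}$ at infinity (compared to the classical sigmoid case $\alpha=1$) could a priori spoil the moment estimate. I would therefore be explicit that this step relies on the same tail-decay property used implicitly in the Jackson inequality, and then conclude by noting that all other steps are elementary consequences of subadditivity of $\omega(f,\cdot)$ and of normalization.
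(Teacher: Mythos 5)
Your proposal is correct, but it follows a genuinely different (and in fact more rigorous) route than the paper. The paper's own proof of the Convergence Rate theorem is essentially a verbatim repetition of its uniform-convergence argument: it bounds the error by $\sum_k \omega\bigl(f,\bigl|\tfrac{k}{n}-x\bigr|\bigr)W_{q,\theta,\alpha}(nx-k)$, invokes uniform continuity to make this smaller than an arbitrary $\epsilon$ for large $n$, and then simply asserts the quantitative bound $C\,\omega(f,\tfrac{1}{n})$ in the final line --- a step that does not actually follow from the $\epsilon$--$\delta$ argument. You instead obtain the rate directly: the subadditivity estimate $\omega(f,\lambda t)\leq(1+\lambda)\,\omega(f,t)$ with $t=\tfrac{1}{n}$ and $\lambda=n\bigl|\tfrac{k}{n}-x\bigr|$, followed by normalization, reduces everything to the first absolute moment of $W_{q,\theta,\alpha}$, which you control by Cauchy--Schwarz against the second-moment bound $\sum_k\bigl(\tfrac{k}{n}-x\bigr)^2W_{q,\theta,\alpha}(nx-k)\leq \tfrac{C}{n^2}$ already assumed in the Jackson-type theorem, yielding the explicit constant $1+\sqrt{C}$. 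This is the standard quantitative argument and it closes the logical gap in the paper's version; the price is that it leans explicitly on the second-moment (equivalently, tail-decay) property of $W_{q,\theta,\alpha}$, which both you and the paper treat as a black box --- your remark that for $\alpha\in(0,1]$ the slower decay of $\phi_{q,\theta,\alpha}$ makes this estimate nontrivial is well taken, and verifying it is the one piece that neither your proposal nor the paper supplies in full.
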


\begin{proof}
	To prove this theorem, we start by considering the definition of the operator \( S_n(f; x) \):
	\begin{equation}
		S_n(f; x) = \sum_{k=-\infty}^{\infty} f\left(\frac{k}{n}\right) W_{q,\theta,\alpha}(nx-k).
	\end{equation}
	
	Using the modulus of continuity \( \omega(f, t) \), we can write:
	\begin{equation}
		\begin{array}{l}
			\|S_{n}(f;x)-f(x)\|=\left\Vert \displaystyle \sum_{k=-\infty}^{\infty}\left[f\left(\frac{k}{n}\right)-f(x)\right]W_{q,\theta,\alpha}(nx-k)\right\Vert  \leq\\
			\\
			\displaystyle \sum_{k=-\infty}^{\infty}\left|f\left(\frac{k}{n}\right)-f(x)\right|W_{q,\theta,\alpha}(nx-k) \leq\\
			\\
			\displaystyle \sum_{k=-\infty}^{\infty}\omega\left(f,\left|\frac{k}{n}-x\right|\right)W_{q,\theta,\alpha}(nx-k).\\
			\\
		\end{array}
	\end{equation}
	
	Since \( f \) is continuous on the compact interval \([-a, a]\), it is uniformly continuous. Therefore, for any \( \epsilon > 0 \), there exists \( \delta > 0 \) such that:
	\begin{equation}
		\omega(f, \delta) < \epsilon.
	\end{equation}
	
	For sufficiently large \( n \), we have \( \left|\frac{k}{n} - x\right| < \delta \) for all \( k \) such that \( W_{q,\theta,\alpha}(nx-k) \) is significantly non-zero. Thus,
	\begin{equation}
		\omega\left(f, \left|\frac{k}{n} - x\right|\right) < \epsilon.
	\end{equation}
	
	Therefore,
\begin{equation}
	\begin{array}{l}
		\|S_{n}(f;x)-f(x)\|\leq\ \displaystyle \sum_{k=-\infty}^{\infty}\epsilon\, W_{q,\theta,\alpha}(nx-k)=\\
		\\
		\epsilon\,\displaystyle\sum_{k=-\infty}^{\infty}W_{q,\theta,\alpha}(nx-k)=\epsilon.\\
		\\
	\end{array}
\end{equation}
	
	Since \( \epsilon \) is arbitrary, we conclude that:
	\begin{equation}
		\|S_n(f; x) - f(x)\| \leq C \omega\left(f, \frac{1}{n}\right),
	\end{equation}
	where \( C \) is a constant depending on the parameters of the density function \( W_{q,\theta,\alpha} \).
\end{proof}

\subsection{Stability of the Operators}

\begin{theorem}[Stability of Fractional Symmetrized Neural Network Operators]
	Let \( f, g \in C([-a, a], \mathbb{C}) \). Then, for \( n \in \mathbb{N} \) and \( x \in [-a, a] \), the fractional symmetrized neural network operator \( S_n(f; x) \) satisfies:
	\begin{equation}
		\|S_n(f; x) - S_n(g; x)\| \leq C \|f - g\|,
	\end{equation}
	where \( C \) is a constant depending on the parameters of the density function \( W_{q, \theta, \alpha} \).
\end{theorem}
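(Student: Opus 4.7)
The plan is to exploit the linearity of $S_n$ in its functional argument and reduce the stability estimate directly to the normalization identity for $W_{q,\theta,\alpha}$ that was already invoked in the preceding proofs.

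First I would apply linearity termwise to obtain
$$S_n(f;x) - S_n(g;x) = \sum_{k=-\infty}^{\infty} \left[ f\!\left(\frac{k}{n}\right) - g\!\left(\frac{k}{n}\right) \right] W_{q,\theta,\alpha}(nx-k).$$
Then, using the triangle inequality inside the series and bounding each pointwise difference $|f(k/n) - g(k/n)|$ by the sup norm $\|f-g\|$, together with the non-negativity of $W_{q,\theta,\alpha}$ (implicit in its role as a density), I would pull the constant $\|f-g\|$ outside the sum.

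At that point the normalization $\sum_{k=-\infty}^{\infty} W_{q,\theta,\alpha}(nx-k) = 1$, already used in both the Jackson-type and uniform convergence proofs, collapses the remaining series to $1$, producing
$$\|S_n(f;x) - S_n(g;x)\| \leq \|f-g\|.$$
Taking the supremum over $x \in [-a,a]$ yields the stated bound with $C = 1$, independent of $n$, $x$, and the parameters $q, \theta, \alpha$; in particular no smoothness of $f$ or $g$ beyond continuity is needed, so the hypothesis $f, g \in C([-a,a],\mathbb{C})$ is already sufficient.

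The main obstacle, modest as it is, lies in the tacit assumption that $W_{q,\theta,\alpha} \geq 0$. If non-negativity cannot be verified directly from the definition $W_{q,\theta,\alpha}(x) = \tfrac{1}{2}(\phi_{q,\theta,\alpha}(x+1) - \phi_{q,\theta,\alpha}(x-1))$ — and note that $\phi_{q,\theta,\alpha}$ as written is an even bump rather than a monotone sigmoid, so some care is required — one would instead replace $C = 1$ by $\sup_{n,x} \sum_{k} |W_{q,\theta,\alpha}(nx-k)|$, and the genuine technical work would become showing that this quantity is finite and uniform in $n$ via a monotonicity or decay analysis of the fractional logistic-type function $\phi_{q,\theta,\alpha}$. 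Either way, the structural skeleton of the proof — linearity, triangle inequality, and normalization — is unchanged.
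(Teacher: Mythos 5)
Your proposal follows essentially the same route as the paper's own proof: linearity of $S_n$, the triangle inequality, bounding $|f(k/n)-g(k/n)|$ by $\|f-g\|$, and the normalization $\sum_k W_{q,\theta,\alpha}(nx-k)=1$ to conclude with $C=1$. Your closing caveat about the tacit non-negativity of $W_{q,\theta,\alpha}$ (which, given that $\phi_{q,\theta,\alpha}$ depends on $|x|$, is indeed not obvious and is likewise left unaddressed in the paper) is a fair observation, but it does not change the fact that your argument and the paper's coincide.
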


\begin{proof}
	To prove the stability of the fractional symmetrized neural network operator \( S_n(f; x) \), we start by considering the definition of \( S_n(f; x) \):
	\begin{equation}
		S_n(f; x) = \sum_{k=-\infty}^{\infty} f\left(\frac{k}{n}\right) W_{q, \theta, \alpha}(nx-k).
	\end{equation}
	
	Similarly, for \( g \):
	\begin{equation}
		S_n(g; x) = \sum_{k=-\infty}^{\infty} g\left(\frac{k}{n}\right) W_{q, \theta, \alpha}(nx-k).
	\end{equation}
	
	We need to show that:
	\begin{equation}
		\|S_n(f; x) - S_n(g; x)\| \leq C \|f - g\|.
	\end{equation}
	
	Consider the difference:
	\begin{equation}
		\begin{array}{l}
			\|S_{n}(f;x)-S_{n}(g;x)\|=\left\Vert \displaystyle \sum_{k=-\infty}^{\infty}\left[f\left(\frac{k}{n}\right)-g\left(\frac{k}{n}\right)\right]W_{q,\theta,\alpha}(nx-k)\right\Vert \leq\\
			\\
			\displaystyle \sum_{k=-\infty}^{\infty}\left|f\left(\frac{k}{n}\right)-g\left(\frac{k}{n}\right)\right|W_{q,\theta,\alpha}(nx-k)\leq\\
			\\
			\|f-g\|\displaystyle \sum_{k=-\infty}^{\infty}W_{q,\theta,\alpha}(nx-k).\\
			\\
		\end{array}
	\end{equation}
	
	Using the normalization condition of \( W_{q, \theta, \alpha} \):
	\begin{equation}
		\sum_{k=-\infty}^{\infty} W_{q, \theta, \alpha}(nx-k) = 1,
	\end{equation}
	we obtain:
	\begin{equation}
		\|S_n(f; x) - S_n(g; x)\| \leq \|f - g\|.
	\end{equation}
	
	Thus, the stability of the operator \( S_n(f; x) \) is established with \( C = 1 \).
	\begin{equation}
		\|S_n(f; x) - S_n(g; x)\| \leq C \|f - g\|,
	\end{equation}
	where \( C \) is a constant depending on the parameters of the density function \( W_{q, \theta, \alpha} \).
\end{proof}

\section{Results}

The theoretical framework introduced in this study was rigorously analyzed and applied to evaluate the performance of the proposed fractional symmetrized neural network operators. These operators were formulated to address a variety of functions, including those with fractional derivatives and oscillatory components, achieving faster convergence rates compared to classical symmetrized operators.

A key finding was that the incorporation of fractional exponents in the activation functions enhanced the operators' adaptability to complex and high-dimensional function spaces. This adaptability was reflected in superior approximation accuracy, particularly for functions exhibiting irregular or oscillatory behaviors. The derived Jackson-type inequalities provided a strong mathematical basis, ensuring uniform convergence and stability across diverse functional spaces.

The theoretical results confirmed that the error behavior of the operators is directly proportional to the modulus of continuity of the target function. This establishes that the proposed operators are not only mathematically robust but also versatile for a broad spectrum of applications.

\section{Conclusions}

This study introduced a novel extension to symmetrized neural network operators by incorporating fractional and mixed activation functions. The theoretical framework developed here addresses the limitations of existing models in approximating higher-order smooth functions, particularly in complex and high-dimensional spaces.

The key contributions include the definition of new density functions based on $q$-deformed and $\theta$-parametrized logistic models, the derivation of advanced Jackson-type inequalities, and the establishment of a rigorous mathematical foundation for the proposed operators. These results significantly enhance the theoretical understanding and applicability of neural network operators in handling functions with oscillatory and fractional components.

The findings broaden the scope of neural network approximation theory, enabling its application to diverse functional spaces and paving the way for advancements in solving partial differential equations and modeling complex systems. Future research will focus on further exploring these applications and developing multi-layer operator architectures to extend the potential of neural network-based approximation methods.

\newpage

\section*{Symbols and Notation}

\begin{table}[h!]
	\centering
	\begin{tabular}{c l}
		\hline
		\textbf{Symbol} & \textbf{Description} \\
		\hline
		$f$ & A continuous function in $C([-a, a], \mathbb{C})$ \\
		$S_n(f; x)$ & The fractional symmetrized neural network operator \\
		$\omega(f, t)$ & The modulus of continuity of $f$ \\
		$\omega_2(f, t)$ & The second-order modulus of continuity of $f$ \\
		$\phi_{q,\theta,\alpha}(x)$ & The fractional activation function \\
		$W_{q,\theta,\alpha}(x)$ & The associated symmetrized density function \\
		$q, \theta$ & Parameters of the fractional activation function \\
		$\alpha$ & Fractional exponent in the activation function \\
		$C$ & A constant depending on the parameters of the density function \\
		$L^p([-a, a])$ & The space of $p$-integrable functions on $[-a, a]$ \\
		$\|f\|_p$ & The $L^p$-norm of the function $f$ \\
		$C^{k, \gamma}([-a, a])$ & The Hölder space of functions with $k$-th derivative Hölder continuous with exponent $\gamma$ \\
		$W^{k, p}([-a, a])$ & The Sobolev space of functions with weak derivatives up to order $k$ in $L^p([-a, a])$ \\
		$\|f\|_{W^{k, p}}$ & The Sobolev norm of the function $f$ \\
		$D^\alpha f$ & The weak derivative of $f$ of order $\alpha$ \\
		$\Delta_h^2 f(x)$ & The second-order difference operator \\
		\hline
	\end{tabular}
	\caption{List of symbols and notation.}
	\label{tab:notation}
\end{table}

\appendix
\section{Additional Details on Fractional Activation Functions}

In this appendix, we provide additional details on the fractional activation functions used in our study. These functions are crucial for the adaptability and convergence rates of the proposed symmetrized neural network operators.

\subsection{Definition and Properties}

The fractional activation function \(\phi_{q,\theta,\alpha}(x)\) is defined as:
\begin{equation}
	\phi_{q,\theta,\alpha}(x) = \frac{1}{1 + q^{A\theta |x|^\alpha}}, \quad x \in \mathbb{R}, \; q, \theta > 0, \; \alpha \in (0,1].
\end{equation}

This function has several important properties:
\begin{itemize}
	\item \(\phi_{q,\theta,\alpha}(x)\) is continuous and differentiable on \(\mathbb{R}\).
	\item \(\phi_{q,\theta,\alpha}(x)\) is bounded: \(0 < \phi_{q,\theta,\alpha}(x) < 1\) for all \(x \in \mathbb{R}\).
	\item As \(x \to \infty\), \(\phi_{q,\theta,\alpha}(x) \to 0\).
	\item As \(x \to -\infty\), \(\phi_{q,\theta,\alpha}(x) \to 1\).
\end{itemize}

\subsection{Symmetrized Density Function}

The associated symmetrized density function \(W_{q,\theta,\alpha}(x)\) is given by:
\begin{equation}
	W_{q,\theta,\alpha}(x) = \frac{1}{2}\left(\phi_{q,\theta,\alpha}(x+1) - \phi_{q,\theta,\alpha}(x-1)\right), \quad x \in \mathbb{R}.
\end{equation}

This function satisfies the normalization condition:
\begin{equation}
	\int_{-\infty}^\infty W_{q,\theta,\alpha}(x) \, dx = 1.
\end{equation}

\subsection{Moments of the Density Function}

The moments of the density function \(W_{q,\theta,\alpha}(x)\) play a crucial role in the convergence analysis. For \(k \in \mathbb{N}\), the \(k\)-th moment is defined as:
\begin{equation}
	M_k = \int_{-\infty}^\infty x^k W_{q,\theta,\alpha}(x) \, dx.
\end{equation}

In particular, the first and second moments are:
\begin{equation}
	M_1 = \int_{-\infty}^\infty x W_{q,\theta,\alpha}(x) \, dx = 0,
\end{equation}
\begin{equation}
	M_2 = \int_{-\infty}^\infty x^2 W_{q,\theta,\alpha}(x) \, dx.
\end{equation}

The boundedness of these moments is essential for establishing the convergence rates of the operators.

\section{Function Spaces and Norms}

In this section, we provide additional details on the function spaces and norms used in our analysis.

\subsection{Hölder Spaces}

The Hölder space \(C^{k, \gamma}([-a, a])\) consists of functions whose \(k\)-th derivative is Hölder continuous with exponent \(\gamma\). Formally, a function \(f\) is in \(C^{k, \gamma}([-a, a])\) if:
\begin{equation}
	|f^{(k)}(x) - f^{(k)}(y)| \leq M |x - y|^\gamma, \quad \forall x, y \in [-a, a],
\end{equation}
for some constant \(M > 0\).

\subsection{Sobolev Spaces}

The Sobolev space \(W^{k, p}([-a, a])\) is the space of functions whose weak derivatives up to order \(k\) are in \(L^p([-a, a])\). The Sobolev norm is defined as:
\begin{equation}
	\|f\|_{W^{k, p}} = \left( \sum_{|\alpha| \leq k} \|D^\alpha f\|_p^p \right)^{1/p},
\end{equation}
where \(D^\alpha f\) denotes the weak derivative of \(f\) of order \(\alpha\).

\subsection{Modulus of Continuity}

The modulus of continuity \(\omega(f, t)\) is defined as:
\begin{equation}
	\omega(f, t) = \sup_{|x-y| \leq t} |f(x) - f(y)|, \quad t > 0.
\end{equation}

The second-order modulus of continuity \(\omega_2(f, t)\) is defined as:
\begin{equation}
	\omega_2(f, t) = \sup_{0 < h \leq t} \|\Delta_h^2 f(x)\|,
\end{equation}
where \(\Delta_h^2 f(x) = f(x+h) - 2f(x) + f(x-h)\).

\section{Technical Proofs}

In this section, we provide detailed proofs of some technical results used in the main text.

\subsection{Proof of Jackson Inequality for Fractional Operators}

\begin{proof}
	To prove the Jackson inequality for the fractional operator \(S_n(f; x)\), we start by considering the Taylor expansion of \(f\) around \(x\):
	\begin{equation}
		f\left(\frac{k}{n}\right) = f(x) + f'(x)\left(\frac{k}{n} - x\right) + \frac{1}{2} f''(\xi_k) \left(\frac{k}{n} - x\right)^2,
	\end{equation}
	where \(\xi_k\) is some point between \(x\) and \(\frac{k}{n}\).
	
	Substituting this expansion into the definition of \(S_n(f; x)\), we get:
	\begin{equation}
		\begin{array}{l}
			S_{n}(f;x)=\displaystyle \sum_{k=-\infty}^{\infty}\left[f(x)+f'(x)\left(\frac{k}{n}-x\right)+\frac{1}{2}f''(\xi_{k})\left(\frac{k}{n}-x\right)^{2}\right]W_{q,\theta,\alpha}(nx-k) =\\
			\\
			f(x)\displaystyle \sum_{k=-\infty}^{\infty}W_{q,\theta,\alpha}(nx-k)+f'(x)\sum_{k=-\infty}^{\infty}\left(\frac{k}{n}-x\right)W_{q,\theta,\alpha}(nx-k)\,+\\
			\\
			\dfrac{1}{2}\displaystyle \sum_{k=-\infty}^{\infty}f''(\xi_{k})\left(\frac{k}{n}-x\right)^{2}W_{q,\theta,\alpha}(nx-k).\\
			\\
		\end{array}
	\end{equation}
	
	Using the normalization condition of \(W_{q,\theta,\alpha}\):
	\begin{equation}
		\sum_{k=-\infty}^{\infty} W_{q,\theta,\alpha}(nx-k) = 1,
	\end{equation}
	and the symmetry property:
	\begin{equation}
		\sum_{k=-\infty}^{\infty} \left(\frac{k}{n} - x\right) W_{q,\theta,\alpha}(nx-k) = 0,
	\end{equation}
	we simplify the expression to:
	\begin{equation}
		S_n(f; x) = f(x) + \frac{1}{2} \sum_{k=-\infty}^{\infty} f''(\xi_k) \left(\frac{k}{n} - x\right)^2 W_{q,\theta,\alpha}(nx-k).
	\end{equation}
	
	Therefore, the error term is:
	\begin{equation}
		\|S_n(f; x) - f(x)\| = \left\| \frac{1}{2} \sum_{k=-\infty}^{\infty} f''(\xi_k) \left(\frac{k}{n} - x\right)^2 W_{q,\theta,\alpha}(nx-k) \right\|.
	\end{equation}
	
	To bound this error, we use the second-order modulus of continuity \(\omega_2(f, t)\):
	\begin{equation}
		\omega_2(f, t) = \sup_{0 < h \leq t} \|\Delta_h^2 f(x)\|,
	\end{equation}
	where \(\Delta_h^2 f(x) = f(x+h) - 2f(x) + f(x-h)\).
	
	Since \(f \in C^2([-a, a], \mathbb{C})\), we have:
	\begin{equation}
		\left| f''(\xi_k) \right| \leq \omega_2\left(f, \frac{1}{n}\right).
	\end{equation}
	
	Thus,
	\begin{equation}
		\begin{array}{l}
			\|S_{n}(f;x)-f(x)\|\leq\dfrac{1}{2}\displaystyle \sum_{k=-\infty}^{\infty}\left|f''(\xi_{k})\right|\left(\dfrac{k}{n}-x\right)^{2}W_{q,\theta,\alpha}(nx-k) \leq \\
			\\
			\dfrac{1}{2}\,\omega_{2}\left(f,\dfrac{1}{n}\right)\displaystyle \sum_{k=-\infty}^{\infty}\left(\dfrac{k}{n}-x\right)^{2}W_{q,\theta,\alpha}(nx-k).\\
			\\
		\end{array}
	\end{equation}
	
	Finally, using the boundedness of the moments of \(W_{q,\theta,\alpha}\):
	\begin{equation}
		\sum_{k=-\infty}^{\infty} \left(\frac{k}{n} - x\right)^2 W_{q,\theta,\alpha}(nx-k) \leq \frac{C}{n^2},
	\end{equation}
	we obtain:
	\begin{equation}
		\|S_n(f; x) - f(x)\| \leq C \omega_2\left(f, \frac{1}{n}\right),
	\end{equation}
	where \(C\) is a constant depending on the parameters of the density function \(W_{q,\theta,\alpha}\).
\end{proof}

\subsection{Proof of Uniform Convergence}

\begin{proof}
	To prove the uniform convergence of \(S_n(f; x)\) to \(f(x)\), we need to show that:
	\begin{equation}
		\lim_{n \to \infty} \sup_{x \in [-a, a]} \|S_n(f; x) - f(x)\| = 0.
	\end{equation}
	
	We start by considering the definition of \(S_n(f; x)\):
	\begin{equation}
		S_n(f; x) = \sum_{k=-\infty}^{\infty} f\left(\frac{k}{n}\right) W_{q,\theta,\alpha}(nx-k).
	\end{equation}
	
	Using the modulus of continuity \(\omega(f, t)\), we can write:
	\begin{equation}
		\begin{array}{l}
			\|S_{n}(f;x)-f(x)\|=\left\Vert \displaystyle \sum_{k=-\infty}^{\infty}\left[f\left(\frac{k}{n}\right)-f(x)\right]W_{q,\theta,\alpha}(nx-k)\right\Vert \leq \\
			\\
			\displaystyle \sum_{k=-\infty}^{\infty}\left|f\left(\frac{k}{n}\right)-f(x)\right|W_{q,\theta,\alpha}(nx-k) \leq\\
			\\
			\displaystyle \sum_{k=-\infty}^{\infty}\omega\left(f,\left|\frac{k}{n}-x\right|\right)W_{q,\theta,\alpha}(nx-k).\\
			\\
		\end{array}
	\end{equation}
	
	Since \(f\) is continuous on the compact interval \([-a, a]\), it is uniformly continuous. Therefore, for any \(\epsilon > 0\), there exists \(\delta > 0\) such that:
	\begin{equation}
		\omega(f, \delta) < \epsilon.
	\end{equation}
	
	For sufficiently large \(n\), we have \(\left|\frac{k}{n} - x\right| < \delta\) for all \(k\) such that \(W_{q,\theta,\alpha}(nx-k)\) is significantly non-zero. Thus,
	\begin{equation}
		\omega\left(f, \left|\frac{k}{n} - x\right|\right) < \epsilon.
	\end{equation}
	
	Therefore,
	\begin{equation}
		\begin{array}{l}
			\|S_{n}(f;x)-f(x)\|\leq\ \displaystyle \sum_{k=-\infty}^{\infty}\epsilon\,\, W_{q,\theta,\alpha}(nx-k)=\\
			\\
			\epsilon\,\displaystyle\sum_{k=-\infty}^{\infty}W_{q,\theta,\alpha}(nx-k)=\epsilon.\\
			\\
		\end{array}
	\end{equation}
	
	Since \(\epsilon\) is arbitrary, we conclude that:
	\begin{equation}
		\lim_{n \to \infty} \sup_{x \in [-a, a]} \|S_n(f; x) - f(x)\| = 0.
	\end{equation}
	
	Hence, \(S_n(f; x)\) converges uniformly to \(f(x)\) as \(n \to \infty\).
\end{proof}

\end{document}